\theoremstyle{plain}
\theoremstyle{definition}
\theoremstyle{remark}
\icmltitlerunning{Incorporating Arbitrary Matrix Group Equivariance into KANs}
\begin{document}

\twocolumn[
\icmltitle{Incorporating Arbitrary Matrix Group Equivariance into KANs}



\icmlsetsymbol{equal}{*}

\begin{icmlauthorlist}
\icmlauthor{Lexiang Hu}{a}
\icmlauthor{Yisen Wang}{a,b}
\icmlauthor{Zhouchen Lin}{a,b,c}
\end{icmlauthorlist}

\icmlaffiliation{a}{State Key Lab of General AI, School of Intelligence Science and Technology, Peking University}
\icmlaffiliation{b}{Institute for Artificial Intelligence, Peking University}
\icmlaffiliation{c}{Pazhou Laboratory (Huangpu), Guangzhou, Guangdong, China}

\icmlcorrespondingauthor{Zhouchen Lin}{zlin@pku.edu.cn}

\icmlkeywords{Machine Learning, ICML}

\vskip 0.3in
]



\printAffiliationsAndNotice{}  

\begin{abstract}
Kolmogorov-Arnold Networks (KANs) have seen great success in scientific domains thanks to spline activation functions, becoming an alternative to Multi-Layer Perceptrons (MLPs). However, spline functions may not respect symmetry in tasks, which is crucial prior knowledge in machine learning. In this paper, we propose Equivariant Kolmogorov-Arnold Networks (EKAN), a method for incorporating arbitrary matrix group equivariance into KANs, aiming to broaden their applicability to more fields. We first construct gated spline basis functions, which form the EKAN layer together with equivariant linear weights, and then define a lift layer to align the input space of EKAN with the feature space of the dataset, thereby building the entire EKAN architecture. Compared with baseline models, EKAN achieves higher accuracy with smaller datasets or fewer parameters on symmetry-related tasks, such as particle scattering and the three-body problem, often reducing test MSE by several orders of magnitude. Even in non-symbolic formula scenarios, such as top quark tagging with three jet constituents, EKAN achieves comparable results with state-of-the-art equivariant architectures using fewer than $40\%$ of the parameters, while KANs do not outperform MLPs as expected. Code and data are available at \href{https://github.com/hulx2002/EKAN}{https://github.com/hulx2002/EKAN}.
\end{abstract}

\section{Introduction}

Kolmogorov-Arnold Networks (KANs) \cite{liu2024kan,liu2024kan2} are a novel type of neural network inspired by the Kolmogorov-Arnold representation theorem \cite{tikhomirov1991representation,braun2009constructive}, which offers an alternative to Multi-Layer Perceptrons (MLPs) \cite{haykin1998neural,cybenko1989approximation,hornik1989multilayer}. Unlike MLPs, which utilize fixed activation functions on nodes, KANs employ learnable activation functions on edges, replacing the linear weight parameters entirely with univariate functions parameterized as splines \cite{de1978practical}. On the other hand, each layer of KANs can be viewed as spline basis functions followed by a linear layer \cite{dhiman2024kan}. This architecture allows KANs to achieve better accuracy in symbolic formula representation tasks compared with MLPs, particularly in function fitting and scientific applications. Subsequent works based on KANs have demonstrated superior performance in other areas, such as sequential data \cite{vaca2024kolmogorov,genet2024tkan,genet2024temporal,xu2024kolmogorov}, graph data \cite{bresson2024kagnns,de2024kolmogorov,kiamari2024gkan,zhang2024graphkan}, image data \cite{cheon2024kolmogorov,cheon2024demonstrating,azam2024suitability,li2024u,seydi2024unveiling,bodner2024convolutional}, and so on.

However, KANs themselves perform poorly on non-symbolic formula representation tasks \cite{yu2024kan}. One reason for this is that splines struggle to respect data type and symmetry, both of which play important roles in machine learning. Many recent works utilize symmetry in data to design network architectures, achieving better efficiency and generalization on specific tasks. For example, Convolutional Neural Networks (CNNs) \cite{lecun1989backpropagation} and Group equivariant Convolutional Neural Networks (GCNNs) \cite{cohen2016group} leverage translational and rotational symmetries in image data, while DeepSets \cite{zaheer2017deep} and equivariant graph networks \cite{maron2019invariant,keriven2019universal,satorras2021n} exploit the permutation symmetry in set and graph data. Equivariant Multi-Layer Perceptrons (EMLP) \cite{finzi2021practical} propose a general method that allows MLPs to be equivariant with respect to arbitrary matrix groups for specific data types, thereby unifying the aforementioned specialized network architectures.

Inspired by these equivariant architectures, we propose Equivariant Kolmogorov-Arnold Networks (EKAN), which embed matrix group equivariance into KANs. By specifying the data type and symmetry, EKAN can serve as a general framework for applying KANs to various areas. In \cref{sec:background}, we introduce the preliminary knowledge of group theory. In \cref{sec:related work}, we summarize related works. In \cref{sec:ekan layer}, we construct a layer of EKAN. We add gate scalars to the input and output space of each layer, and define gated spline basis functions between the input and post-activation space. To ensure equivariance when linearly combining gated basis functions, we construct the equivariant constraint and solve for the equivariant linear weights. In \cref{sec:ekan}, we build the entire EKAN architecture. We insert a lift layer before the first layer and discard the gate scalars from the output of the final layer, so that the input and output space of EKAN can be consistent with the original dataset. In \cref{sec:experiments}, we evaluate EKAN on tasks with known symmetries. We show that EKAN can achieve higher accuracy than baseline models with smaller datasets or fewer parameters. In \cref{sec:conclusion}, we conclude this work. In \cref{fig:introduction}, we compare the architectures of KANs and EKAN.

\begin{figure}[ht]
	\centering
	\includegraphics[width=\columnwidth]{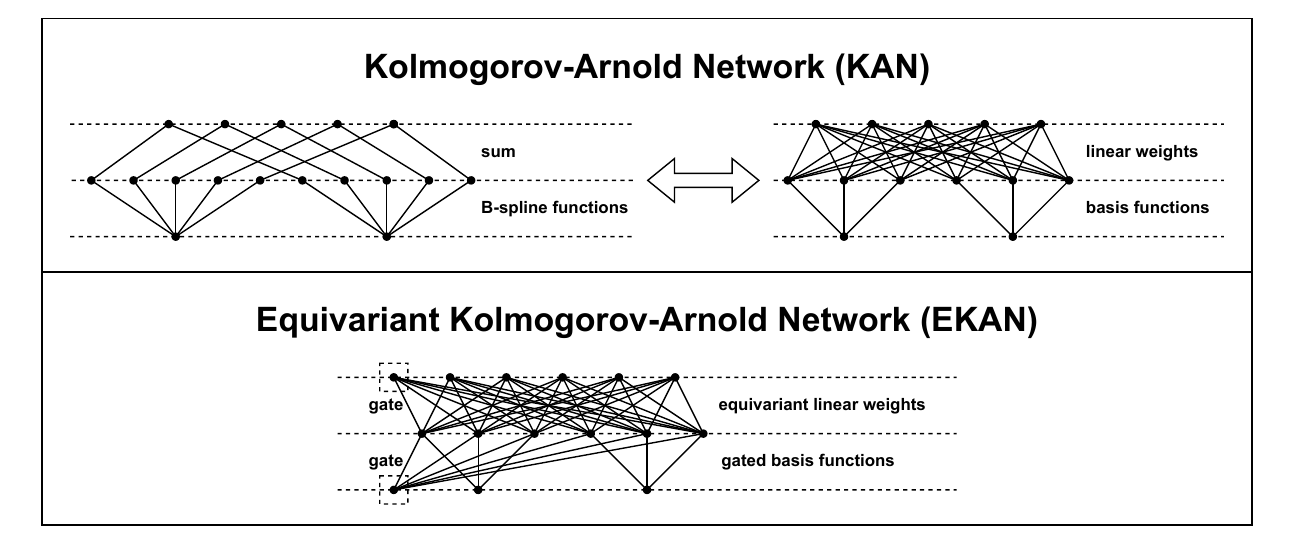}
	\caption{Comparison of the architectures of Kolmogorov-Arnold Networks (KANs) and Equivariant Kolmogorov-Arnold Networks (EKAN).}
	\label{fig:introduction}
\end{figure}

In summary, our contributions are as follows: (1) We propose EKAN, an architecture that makes KANs equivariant to matrix groups. To our knowledge, EKAN is the first attempt to combine equivariance with KANs, and we expect that it can serve as a general framework to broaden the applicability of KANs to more areas. (2) We specify the space structures of the EKAN Layer and define gated spline basis functions. We theoretically prove that gated basis functions can ensure equivariance between the gated input space and the post-activation space. Then, we insert a lift layer to preprocess the raw input feature, which aligns the input space of EKAN with the feature space of the dataset. (3) Experiments on tasks with matrix group equivariance, such as particle scattering and the three-body problem, demonstrate that EKAN often significantly outperforms baseline models, even with smaller datasets or fewer parameters. In the task of non-symbolic formula representation, where KANs are not proficient, such as top quark tagging with three jet constituents, EKAN can still achieve comparable results with state-of-the-art equivariant architectures while using fewer than $40\%$ of the parameters.

\section{Background}
\label{sec:background}

Before presenting related works and our method, we first introduce some preliminary knowledge of group theory.

\paragraph{Groups and generators.}

The matrix group $\widetilde{G}$ is a subgroup of the general linear group $\mathrm{GL}(n)$, which consists of $n \times n$ invertible matrices. Each group element $g \in \widetilde{G}$ can be decomposed into a continuous and a finite component $g = g_1 g_2$. We can obtain the continuous component $g_1$ from a Lie algebra element $A \in \mathfrak{g}$ through the exponential map $\exp: \mathfrak{g} \rightarrow \widetilde{G}$, i.e., $g_1 = \exp(A) = \sum_{k=0}^\infty \frac{A^k}{k!}$. Representing the space where the Lie algebra resides as a basis $\{A_i\}_{i=1}^D$, we have $g_1 = \exp \left( \sum_{i=1}^D \alpha_i A_i \right)$. On the other hand, the finite component $g_2$ can be generated by a set of group elements $\{h_i\}_{i=1}^M$ and their inverses $h_{-k} = h_k^{-1}$, formally speaking $g_2 = \prod_{i=1}^N h_{k_i}$. Overall, we can express the matrix group element as:
\begin{equation}
	\label{eq:generators}
	g = \exp \left( \sum_{i=1}^D \alpha_i A_i \right) \prod_{i=1}^N h_{k_i},
\end{equation}
where $\{A_i\}_{i=1}^D$ are called infinitesimal generators and $\{h_i\}_{i=1}^M$ are called discrete generators. We introduce common matrix groups and their generators in \cref{sec:common matrix groups}.

\paragraph{Group representations.}

The group representation $\rho_V: \widetilde{G} \rightarrow \mathrm{GL}(m)$ maps group elements to $m \times m$ invertible matrices, which describes how group elements act on the vector space $V=\mathbb{R}^m$ through linear transformations. For $g_1, g_2 \in \widetilde{G}$ it satisfies $\rho_V (g_1 g_2) = \rho_V(g_1) \rho_V(g_2)$. Similarly, the Lie algebra representation is defined as $\mathrm{d} \rho_V: \mathfrak{g} \rightarrow \mathfrak{gl}(m)$, and for $A_1, A_2 \in \mathfrak{g}$, we have $\mathrm{d} \rho_V(A_1 + A_2) = \mathrm{d} \rho_V(A_1) + \mathrm{d} \rho_V(A_2)$. We can relate the Lie group representation to the Lie algebra representation through the exponential map. Specifically, for $A \in \mathfrak{g}$, $\rho_V(\exp(A)) = \exp(\mathrm{d} \rho_V(A))$ holds. Then, combining with \cref{eq:generators}, the matrix group representation can be written as:
\begin{equation}
	\label{eq:representations}
	\rho_V(g) = \exp \left( \sum_{i=1}^D \alpha_i \mathrm{d} \rho_V(A_i) \right) \prod_{i=1}^N \rho_V(h_{k_i}).
\end{equation}
We can construct the complex vector space from the base vector space using the dual ($*$), direct sum ($\oplus$), and tensor product ($\otimes$) operations. To give a concrete example, let $V_1$ and $V_2$ be base vector spaces. The multi-channel vector space, matrix space and parameter space of the linear mapping $V_1 \rightarrow V_2$ can be represented as $V_1 \oplus V_2$, $V_1 \otimes V_2$, and $V_2 \otimes V_1^*$, respectively. In general, given a matrix group $\widetilde{G}$, we can normalize a vector space $U$ into a polynomial-like form with respect to the base vector space $V$ of $\widetilde{G}$ (the space where the group representation is the identity mapping $\rho_V(g) = g$; intuitively, the transformation matrix is the matrix group element itself):
\begin{equation}
	\label{eq:space}
	U = \bigoplus_{a=1}^A T(p_a, q_a) = \bigoplus_{a=1}^A V^{p_a} \otimes (V^*)^{q_a},
\end{equation}
where $V^{p_a} = \underbrace{V \otimes V \otimes \dots \otimes V}_{p_a}$ and $(V^*)^{q_a} = \underbrace{V^* \otimes V^* \otimes \dots \otimes V^*}_{q_a}$. Its group representation and Lie algebra representation can be generated by the following rules:
\begin{equation}
	\label{eq:rules}
	\begin{cases}
		\rho_{V^*}(g) = \rho_V(g^{-1})^\top, \\
		\rho_{V_1 \oplus V_2}(g) = \rho_{V_1}(g) \oplus \rho_{V_2}(g), \\
		\rho_{V_1 \otimes V_2}(g) = \rho_{V_1}(g) \otimes \rho_{V_2}(g), \\
		\mathrm{d} \rho_{V^*}(A) = -\mathrm{d} \rho_V(A)^\top, \\
		\mathrm{d} \rho_{V_1 \oplus V_2}(A) = \mathrm{d} \rho_{V_1}(A) \oplus \mathrm{d} \rho_{V_2}(A), \\
		\mathrm{d} \rho_{V_1 \otimes V_2}(A) = \mathrm{d} \rho_{V_1}(A) \boxplus \mathrm{d} \rho_{V_2}(A),
	\end{cases}
\end{equation}
where $\oplus$ is the direct sum, $\otimes$ is the Kronecker product, and $\boxplus$ is the Kronecker sum. We provide concrete examples of the space structure in \cref{sec:concrete} to help readers understand it intuitively.

\paragraph{Equivariance and invariance.}

The symmetry can be divided into equivariance and invariance, meaning that when a transformation is applied to the input space, the output space either transforms in the same way or remains unchanged. Formally, given a group $\widetilde{G}$, a function $f: U_i \rightarrow U_o$ is equivariant if:
\begin{equation}
	\label{eq:equivariance}
	\forall g \in \widetilde{G}, v_i \in U_i: \quad \rho_o(g) f(v_i) = f(\rho_i(g) v_i),
\end{equation}
where $\rho_i$ and $\rho_o$ are group representations of $U_i$ and $U_o$, respectively. Specifically, when $\rho_o(g) = I$, the function $f$ is invariant.

\section{Related Works}
\label{sec:related work}

\begin{figure*}[t]
	\centering
	\includegraphics[width=\linewidth]{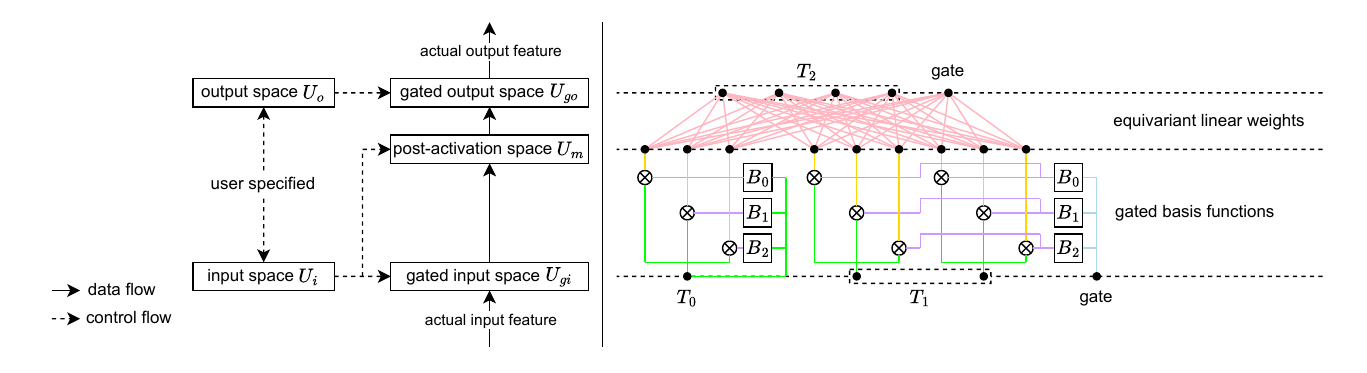}
	\vskip -0.06in
	\caption{(Left) The space structures of the EKAN layer and their relationships. (Right) The architecture of the EKAN layer, which consists of gated basis functions and equivariant linear weights.}
	\label{fig:ekanlayer}
	\vskip -0.06in
\end{figure*}

\paragraph{Equivariant networks.}

Equivariant networks have gained significant attention in recent years due to their ability to respect and leverage symmetries in data. GCNNs \cite{cohen2016group} embed discrete group equivariance into traditional CNNs through group convolutions. Steerable CNNs \cite{cohen2017steerable} introduce steerable filters, which provide a more flexible and efficient way to achieve equivariance compared with GCNNs. Subsequently, SFCNNs \cite{weiler2018learning} and E(2)-equivariant steerable CNNs \cite{weiler2019general} extend GCNNs and steerable CNNs to continuous group equivariance, while 3D Steerable CNNs \cite{weiler20183d} extend these models to 3D volumetric data. On the other hand, some works use partial differential operators (PDOs) to construct equivariant networks \cite{shen2020pdo,shen2021pdo,shen2022pdo,he2022neural,li2024affine,li2025affine}. Furthermore, equivariant self-supervised learning \cite{wang2020self,dangovski2022equivariant,lee2022self,garrido2023self,gupta2024structuring} has also achieved outstanding results. Based on these theoretical frameworks, equivariant networks are widely applied in various fields, such as mathematics \cite{zhao2023integrating}, physics \cite{wang2021incorporating,hu2025symmetry}, biochemistry \cite{bekkers2018roto,winkels2019pulmonary,graham2020dense}, and others. EMLP \cite{finzi2021practical} embeds matrix group equivariance into MLPs layerwise, which we discuss in detail in \cref{sec:emlp}.

\paragraph{Kolmogorov-Arnold Networks (KANs).}

KANs \cite{liu2024kan,liu2024kan2} place learnable activation functions on the edges and then sum them to obtain the output nodes, replacing the fixed activation functions applied to the output nodes of linear layers in MLPs. Formally, the $l$-th KAN layer can be expressed as:
\begin{equation}
	\label{eq:kan1}
	x_{l+1, j} = \sum_{i=1}^{n_l} \phi_{l, j, i} (x_{l, i}), \quad j=1, \dots, n_{l+1},
\end{equation}
where $n_l$ is the number of nodes in the $l$-th layer, $x_{l, i}$ is the value of the $i$-th node in the $l$-th layer, and $\phi_{l, j, i}$ is the activation function that connects $x_{l, i}$ to $x_{l+1, j}$. In practice, $\phi_{l, j, i}$ consists of a spline function and a $\mathrm{silu}$ function. The spline basis functions are determined by grids, which are updated based on the input samples. Then we can write the post-activation of $\phi_{l, j, i}$ as $\phi_{l, j, i}(x_{l, i}) = \sum_{b=0}^{G + k - 1} w_{l, j, i, b} B_{l, i, b}(x_{l, i}) + w_{l, j, i, G + k} \mathrm{silu}(x_{l, i})$, where $G$, $k$, and $B_{l, i, b}$ represent the number of grid intervals, the order, and the $b$-th basis function of splines at node $x_{l, j}$, respectively. Therefore, the KAN layer can be viewed as spline basis functions $B_{l, i, b}$ and a $\mathrm{silu}$ function, followed by a linear layer with $w_{l, j, i, b}$ as parameters \cite{dhiman2024kan}:
\begin{align}
	\label{eq:kan2}
	x_{l+1, j} = & \sum_{i=1}^{n_l} \Bigg[ \sum_{b=0}^{G + k - 1} w_{l, j, i, b} B_{l, i, b}(x_{l, i}) \notag \\
	& + w_{l, j, i, G + k} \mathrm{silu}(x_{l, i}) \Bigg], \quad j=1, \dots, n_{l+1}.
\end{align}

\section{EKAN Layer}
\label{sec:ekan layer}

In this section, we construct the EKAN layer, which is equivariant with respect to the matrix group $\widetilde{G}$. First, we define the space structures and explain their relationships. Then, we introduce gated basis functions and equivariant linear weights, which together form a layer of EKAN. We summarize the space structures and network architecture of the EKAN layer in \cref{fig:ekanlayer}.

\subsection{Space Structures}

A key aspect of equivariant networks is how group elements act on the feature space. Therefore, unlike conventional networks, which only focus on the dimensions of the feature space, equivariant networks need to further clarify the structure of the feature space. For example, for the group $\mathrm{SO}(2)$, two feature spaces $U_1 = V \oplus V = \mathbb{R}^2 \oplus \mathbb{R}^2$ and $U_2 = V \otimes V = \mathbb{R}^2 \otimes \mathbb{R}^2$ have different group representations $\rho_{U_1}$ and $\rho_{U_2}$, but conventional networks treat them as the same space $U = \mathbb{R}^4$.

We specify the input space and the output space of EKAN layer as $U_i$ and $U_o$, respectively. Their structures can be normalized into the form of \cref{eq:space}. In particular, for ease of later discussion, we extract the scalar space terms $T_0 = T(0, 0)$ and rewrite them as:
\begin{equation}
	\label{eq:iospace}
	\begin{cases}
		U_i = c_i T_0 \oplus \left[ \bigoplus_{a=1}^{A_i} T(p_{i, a}, q_{i, a}) \right], \\
		U_o = c_o T_0 \oplus \left[ \bigoplus_{a=1}^{A_o} T(p_{o, a}, q_{o, a}) \right],
	\end{cases}
\end{equation}
where $p_{i,a}, q_{i, a}, p_{o, a}, q_{o, a}, c_i, c_o, A_i, A_o \in \mathbb{N}$, $p_{i, a} + q_{i, a} > 0$, $p_{o, a} + q_{o, a} > 0$, and $c T_0 = \underbrace{T_0 \oplus T_0 \oplus \dots \oplus T_0}_{c}$.

We have to emphasize that the actual input/output feature does not lie within $U_i$/$U_o$. To align with gated basis functions, we add a gate scalar $T_0$ to each non-scalar term $T(p_{i, a}, q_{i, a})$/$T(p_{o, a}, q_{o, a})$ in $U_i$/$U_o$ to obtain the actual input/output space. The mechanism behind this approach will be discussed in detail in \cref{sec:gated basis functions}. We denote this actual input/output space as the gated input/output space $U_{gi}$/$U_{go}$:
\begin{equation}
	\label{eq:giospace}
	\begin{cases}
		U_{gi} = c_i T_0 \oplus \left[ \bigoplus_{a=1}^{A_i} T(p_{i, a}, q_{i, a}) \right] \oplus A_i T_0, \\
		U_{go} = c_o T_0 \oplus \left[ \bigoplus_{a=1}^{A_o} T(p_{o, a}, q_{o, a}) \right] \oplus A_o T_0.
	\end{cases}
\end{equation}
As shown in \cref{eq:kan2}, we split the KAN layer into basis functions and linear weights. From this perspective, we correspondingly construct gated basis functions and equivariant linear weights to form the EKAN layer. We refer to the space where the activation values reside after gated basis functions and before equivariant linear weights as the post-activation space $U_m$. The structure of $U_m$ depends on $U_i$, which we will elaborate on in \cref{sec:gated basis functions}.

We summarize the aforementioned space structures and their relationships in \cref{fig:ekanlayer} (Left). The user first specifies the input space $U_i$ and the output space $U_o$ for the EKAN layer. Then the gated input space $U_{gi}$ and the post-activation space $U_m$ are calculated based on $U_i$, and the gated output space $U_{go}$ is calculated based on $U_o$. The actual input feature in $U_{gi}$ passes through gated basis functions to obtain the activation value in $U_m$, which then passes through equivariant linear weights to obtain the actual output feature in $U_{go}$.

\subsection{Gated Basis Functions}
\label{sec:gated basis functions}

Since spline basis functions are nonlinear and have relatively complex iterative forms, directly solving the equivariance constraint in \cref{eq:equivariance} is quite challenging. Recently, gating mechanisms have been widely used in various areas, not only in language models to improve performance \cite{dauphin2017language,shazeer2020glu,gu2023mamba} but also in the design of equivariant networks \cite{weiler20183d,finzi2021practical}. Inspired by this, we propose gated basis functions to make the basis functions in KANs (spline basis functions and the $\mathrm{silu}$ function) equivariant. Suppose that the input feature $v_{gi} \in U_{gi}$ can be decomposed according to the space structure shown in \cref{eq:giospace}:
\begin{equation}
	\label{eq:givector}
	v_{gi} = \left( \bigoplus_{a=1}^{c_i} s_{i, a} \right) \oplus \left( \bigoplus_{a=1}^{A_i} v_{i, a} \right) \oplus \left( \bigoplus_{a=1}^{A_i} s_{i, a}' \right),
\end{equation}
where $s_{i, a}, s_{i, a}' \in T_0$ and $v_{i, a} \in T(p_{i, a}, q_{i, a})$. For the non-scalar term $v_{i, a}$, we apply the basis functions to the corresponding gate scalar $s_{i, a}'$ and then multiply the result by $v_{i, a}$. For the scalar term $s_{i, a}$, we consider it as its own gate scalar, which is equivalent to applying basis functions element-wise to $s_{i, a}$. Formalizing the above content, the post-activation value $v_m \in U_m$ can be written as:
\begin{equation}
	\label{eq:mvector1}
	v_m = \bigoplus_{b=0}^{G+k} v_{m, b}, 
\end{equation}
where
\begin{equation}
	\label{eq:mvector2}
	v_{m, b} = 
	\begin{cases}
		\left[ \bigoplus_{a=1}^{c_i} s_{i, a} B_b(s_{i, a}) \right] \oplus \left[ \bigoplus_{a=1}^{A_i} v_{i, a} B_b(s_{i, a}') \right], \\
		\hspace{140pt} b < G+k, \\
		\left[ \bigoplus_{a=1}^{c_i} s_{i, a} \mathrm{silu}(s_{i, a}) \right] \oplus \left[ \bigoplus_{a=1}^{A_i} v_{i, a} \mathrm{silu}(s_{i, a}') \right], \\
		\hspace{140pt} b = G+k.
	\end{cases}
\end{equation}
Note that $v_{m, b} \in c_i T_0 \oplus \left[ \bigoplus_{a=1}^{A_i} T(p_{i, a}, q_{i, a}) \right] = U_i$. Therefore, we obtain the structure of the post-activation space $U_m$:
\begin{equation}
	\label{eq:mspace}
	U_m = (G+k+1) U_i.
\end{equation}
The following theorem guarantees the equivariance between the gated input space and the post-activation space (see \cref{sec:proof} for proof).
\begin{restatable}{theorem}{gimequi}
	\label{thm:gimequi}
	Given a matrix group $\widetilde{G}$, the gated input space $U_{gi}$ and the post-activation space $U_m$ can be expressed in the forms of \cref{eq:giospace,eq:mspace}, respectively. The function $f: U_{gi} \rightarrow U_m$ is defined by \cref{eq:givector,eq:mvector1,eq:mvector2}, that is, $v_m = f(v_{gi})$. Then, $f$ is equivariant:
	\begin{equation}
		\label{eq:gimequi}
		\forall g \in \widetilde{G}, v_{gi} \in U_{gi}: \quad \rho_m(g) f(v_{gi}) = f(\rho_{gi}(g) v_{gi}),
	\end{equation}
	where $\rho_{gi}$ and $\rho_m$ are group representations of $U_{gi}$ and $U_m$, respectively.
\end{restatable}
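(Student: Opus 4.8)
The plan is to verify the equivariance identity \eqref{eq:gimequi} blockwise, exploiting the direct-sum structure of both representations together with the single decisive fact that $T_0 = T(0,0)$ is the trivial representation, so that every scalar quantity is group-invariant. First I would record how $\rho_{gi}(g)$ acts on the decomposition \eqref{eq:givector}. By the direct-sum rule in \eqref{eq:rules}, $\rho_{gi}(g)$ is block-diagonal: it acts as the identity on each scalar block $s_{i,a}$ and on each gate scalar $s_{i,a}'$ (since $\rho_{T_0}(g) = I$), and as $\rho_{T(p_{i,a},q_{i,a})}(g)$ on each non-scalar block $v_{i,a}$. Hence $\rho_{gi}(g) v_{gi}$ differs from $v_{gi}$ only in that each $v_{i,a}$ is replaced by $\rho_{T(p_{i,a},q_{i,a})}(g)\, v_{i,a}$, while all scalars are left untouched.

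Next I would record the action of $\rho_m(g)$. Because $U_m = (G+k+1)U_i$ by \eqref{eq:mspace}, the representation $\rho_m(g)$ is the direct sum of $G+k+1$ copies of $\rho_{U_i}(g)$, so it acts on $v_m = \bigoplus_b v_{m,b}$ independently in each index $b$, applying $\rho_{U_i}(g)$ to each $v_{m,b} \in U_i$. Applying the same direct-sum rule once more inside each $v_{m,b}$ splits that action into the trivial action on the scalar piece $\bigoplus_a s_{i,a} B_b(s_{i,a})$ and the action $\rho_{T(p_{i,a},q_{i,a})}(g)$ on each vector piece $v_{i,a} B_b(s_{i,a}')$.

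With both actions in hand, the proof reduces to comparing $f(\rho_{gi}(g) v_{gi})$ and $\rho_m(g) f(v_{gi})$ block-by-block. On the scalar blocks both sides equal $s_{i,a} B_b(s_{i,a})$ (and $s_{i,a}\,\mathrm{silu}(s_{i,a})$ in the $b = G+k$ case), since neither the argument nor the output of the basis function is altered by $g$. The one genuinely non-trivial identity, and the step I expect to be the crux, is on the vector blocks, where I must show
\begin{equation}
	\rho_{T(p_{i,a},q_{i,a})}(g)\bigl(v_{i,a}\, B_b(s_{i,a}')\bigr) = \bigl(\rho_{T(p_{i,a},q_{i,a})}(g)\, v_{i,a}\bigr) B_b(s_{i,a}').
\end{equation}
This holds because $B_b(s_{i,a}')$ is a scalar and scalar multiplication commutes with the linear map $\rho_{T(p_{i,a},q_{i,a})}(g)$; crucially, the argument $s_{i,a}'$ is a gate scalar lying in $T_0$ and is therefore invariant under $g$, so evaluating $f$ on the transformed input produces exactly the same basis-function value $B_b(s_{i,a}')$ as appears on the left. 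The identical argument applies verbatim to the $b = G+k$ block with $\mathrm{silu}$ in place of $B_b$. Since the two sides then agree on every block, identity \eqref{eq:gimequi} follows. The only subtlety I would watch for is confirming that the basis functions are always fed the invariant gate scalars rather than the transforming non-scalar features, which is precisely what the gating construction in \eqref{eq:mvector2} enforces.
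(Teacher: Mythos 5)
Your proposal is correct and follows essentially the same route as the paper's proof: both arguments decompose $\rho_{gi}(g)$ and $\rho_m(g)$ blockwise via the direct-sum rules, use the fact that $T_0$ is the trivial representation so the gate scalars (and hence the basis-function values $B_b(s_{i,a}')$, $\mathrm{silu}(s_{i,a}')$) are unchanged by $g$, and then check that on each non-scalar block the group action commutes with multiplication by that invariant scalar. The paper merely packages this as the intermediate identity $f_b(\rho_{gi}(g)v_{gi}) = \rho_i(g) f_b(v_{gi})$ before taking the direct sum over $b$, which is exactly your block-by-block comparison.
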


\subsection{Equivariant Linear Weights}
\label{sec:equivariant linear weights}

\begin{figure*}[t]
	\centering
	\includegraphics[width=\linewidth]{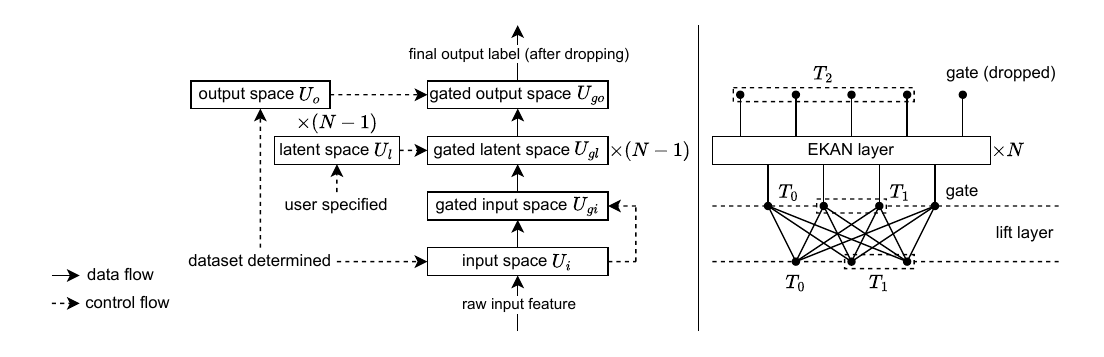}
	\vskip -0.06in
	\caption{(Left) The space structures of EKAN and their relationships. (Right) The EKAN architecture, which consists of a lift layer and stacked EKAN layers.}
	\label{fig:ekan}
	\vskip -0.06in
\end{figure*}

The output feature $v_{go} \in U_{go}$ is obtained by a linear combination of the post-activation value $v_m \in U_m$. Let $U_i = \mathbb{R}^{d_i}$ and $U_{go} = \mathbb{R}^{d_{go}}$, then \cref{eq:mspace} indicates that $U_m = \mathbb{R}^{(G+k+1) d_i}$. The linear weight matrix $W \in \mathbb{R}^{d_{go} \times (G+k+1) d_i}$ can be partitioned as $W = [W_0 \ W_1 \ \dots \ W_{G+k}]$, where $W_b \in \mathbb{R}^{d_{go} \times d_i}$. Combining with \cref{eq:mvector1}, we have:
\begin{equation}
	\label{eq:linear}
	v_{go} = W v_m = \sum_{b=0}^{G+k} W_b v_{m, b}.
\end{equation}
To ensure the equivariance between the post-activation space and the gated output space, we obtain:
\begin{equation}
	\label{eq:linear-equi-1}
	\forall g \in \widetilde{G}, v_m \in U_m: \quad \rho_{go}(g) W v_m = W \rho_m(g) v_m,
\end{equation}
where $\rho_{go}$ is the group representation of $U_{go}$. Using the structure of $U_m$ shown in \cref{eq:mspace} and applying the rules from \cref{eq:rules}, we can derive that $\rho_m(g) = \bigoplus_{b=0}^{G+k} \rho_i(g)$, where $\rho_i$ is the group representation of $U_i$. Therefore, from \cref{eq:mvector1}, we have $\rho_m(g) v_m = \bigoplus_{b=0}^{G+k} \rho_i(g) v_{m, b}$. Then \cref{eq:linear-equi-1} can be written as:
\begin{align}
	\label{eq:linear-equi-2}
	& \forall g \in \widetilde{G}, \{v_{m, b}\}_{b=0}^{G+k} \in U_i: \notag \\
	& \sum_{b=0}^{G+k} \rho_{go}(g) W_b v_{m, b} = \sum_{b=0}^{G+k} W_b \rho_i(g) v_{m, b}.	
\end{align}
The coefficients of each term in $\{v_{m, b}\}_{b=0}^{G+k}$ are equal:
\begin{equation}
	\label{eq:linear-equi-3}
	\forall g \in \widetilde{G}, b \in \{0,1,\dots,G+k\}: \quad \rho_{go}(g) W_b = W_b \rho_i(g).
\end{equation}
Flattening the linear weight blocks $\{W_b\}_{b=0}^{G+k}$ into vectors, we obtain:
\begin{align}
	\label{eq:linear-equi-4}
	& \forall g \in \widetilde{G}, b \in \{0,1,\dots,G+k\}: \notag \\
	& \rho_{go,i}(g) \mathrm{vec} (W_b) = \mathrm{vec} (W_b),
\end{align}
where $\rho_{go, i} = \rho_{go} \otimes \rho_i^*$ is the group representation of $U_{go} \otimes U_i^*$. We use the same method as EMLP \cite{finzi2021practical} to solve for the equivariant basis $Q$ and equivariant projector $P=QQ^\top$ of $\mathrm{vec}(W_b)$. Similar to the transition from \cref{eq:emlp1} to \cref{eq:emlp2} (in \cref{sec:emlp}), we decompose the group representation $\rho_{go, i}(g)$ into discrete and infinitesimal generators to obtain the following constraint:
\begin{align}
	\label{eq:linear-constraint}
	& \forall b \in \{0, 1, \dots, G+k\}: \notag \\
	& C \mathrm{vec}(W_b) = 
	\begin{bmatrix}
		\mathrm{d} \rho_{go, i}(A_1) \\
		\vdots \\
		\mathrm{d} \rho_{go, i}(A_D) \\
		\rho_{go, i}(h_1) - I \\
		\vdots \\
		\rho_{go, i}(h_M) - I
	\end{bmatrix}
	\mathrm{vec}(W_b) = 0.
\end{align}
Note that the equivariant linear weight blocks $\{W_b\}_{b=0}^{G+k}$ lie in the same subspace, which corresponds to the nullspace of the coefficient matrix $C$. We can obtain it via SVD.

We summarize the architecture of the EKAN layer in \cref{fig:ekanlayer} (Right). In this example, the EKAN layer is equivariant with respect to a 2-dimensional matrix group $\widetilde{G}$ (such as the $\mathrm{SO}(2)$ group). The user specifies the input space $U_i = T_0 \oplus T_1$ (where we abbreviate $T(p, 0) = V^p$ as $T_p$), which represents a scalar space and a vector space, and specifies the output space $U_o = T_2$, which represents a matrix space. Then, the gated input space $U_{gi} = T_0 \oplus T_1 \oplus T_0$ and the gated output space $U_{go} = T_2 \oplus T_0$ each add a gate scalar to the vector space $T_1$ and the matrix space $T_2$. The basis functions are applied to the gate scalars of $T_0$ (itself) and $T_1$, which are then multiplied by the original terms to obtain the post-activation space $U_m = 3 U_i$. The linear weights $W \in \mathbb{R}^{5 \times 9}$ between $U_m$ and $U_{go}$ are within the subspace determined by \cref{eq:linear-constraint} to ensure equivariance. Similar to KANs \cite{liu2024kan}, EKAN updates grids based on the input activations, which we discuss in detail in \cref{sec:grid update}.

\section{EKAN Architecture}
\label{sec:ekan}

In this section, we construct the entire EKAN architecture. The main body of EKAN is composed of stacked EKAN layers. The output space of the $l$-th layer serves as the input space of the $(l+1)$-th layer, which we refer to as the latent space $U_l$. For the dataset, we usually know its data type, or in other words, how group elements act on it. To embed this prior knowledge into EKAN, we set the input space of the first layer as the feature space of the dataset $U_i$ and the output space of the final layer as the label space of the dataset $U_o$.

However, the actual input/output features of the EKAN layer stack lie in $U_{gi}$/$U_{go}$. Therefore, we need to add extensions to align the network with the dataset. First, the gate scalars of the actual output feature are directly dropped to obtain the final output label of EKAN, which resides in $U_o$. Then, we add a lift layer before the first layer to preprocess the raw input feature of EKAN, which is essentially an equivariant linear layer between $U_i$ and $U_{gi}$ (see \cref{sec:equivariant linear weights}).

We summarize the space structures and network architecture of EKAN in \cref{fig:ekan}. The space structures of EKAN can be analogous to the dimensions of a conventional network. The user specifies the latent space $U_l$ of EKAN, which corresponds to specifying the hidden dimension in a conventional network. The input space $U_i$ and the output space $U_o$ are determined by the dataset, similar to how the input and output dimensions are defined in a conventional network. 
In the concrete example, the feature space and label space of the dataset are $U_i = T_0 \oplus T_1$ and $U_o = T_2$, respectively. After passing through the lift layer, a new gate scalar is added to the raw input feature for $T_1$, resulting in the actual input feature $U_{gi} = T_0 \oplus T_1 \oplus T_0$ for the first EKAN layer. The gate scalar in the actual output space $U_{go} = T_2 \oplus T_0$ of the last EKAN layer is dropped to obtain the final output label.

\section{Experiments}
\label{sec:experiments}

\begin{table*}[t]
	\centering
	\caption{Test MSE of different models on the particle scattering dataset with different training set sizes. We present the results in the format of mean $\pm$ std.}
	\label{tab:particle scattering}
	\vskip 0.15in
	\resizebox{\linewidth}{!}{
		\begin{tabular}{l|lllll}
			\toprule
			\multirow{2}{*}{Models} & \multicolumn{5}{c}{Training set size} \\ 
			& $10^2$ & $10^{2.5}$ & $10^3$ & $10^{3.5}$ & $10^4$ \\
			\midrule
			MLP & $(7.33 \pm 0.01) \times 10^{-1}$ & $(6.97 \pm 0.09) \times 10^{-1}$ & $(3.64 \pm 0.30) \times 10^{-1}$ & $(5.04 \pm 0.37) \times 10^{-2}$ & $(1.66 \pm 0.07) \times 10^{-2}$ \\
			MLP + augmentation & $(1.90 \pm 0.22) \times 10^{-1}$ & $(3.97 \pm 0.52) \times 10^{-2}$ & $(1.25 \pm 0.08) \times 10^{-2}$ & $(1.08 \pm 0.03) \times 10^{-2}$ & $(1.34 \pm 0.22) \times 10^{-2}$ \\
			\midrule
			EMLP-$\mathrm{SO}^+(1,3)$ & $(1.27 \pm 0.35) \times 10^{-2}$ & $(2.21 \pm 0.56) \times 10^{-3}$ & $(3.30 \pm 0.86) \times 10^{-4}$ & $(2.24 \pm 0.55) \times 10^{-4}$ & $(1.99 \pm 0.33) \times 10^{-4}$ \\
			EMLP-$\mathrm{SO}(1,3)$ & $(1.47 \pm 0.91) \times 10^{-2}$ & $(2.58 \pm 0.25) \times 10^{-3}$ & $(3.69 \pm 1.25) \times 10^{-4}$ & $(2.73 \pm 0.30) \times 10^{-4}$ & $(2.12 \pm 0.15) \times 10^{-4}$ \\
			EMLP-$\mathrm{O}(1,3)$ & $(8.88 \pm 2.51) \times 10^{-3}$ & $(1.95 \pm 0.18) \times 10^{-3}$ & $(3.30 \pm 0.43) \times 10^{-4}$ & $(2.66 \pm 0.66) \times 10^{-4}$ & $(2.64 \pm 0.28) \times 10^{-4}$ \\
			\midrule
			KAN & $(6.70 \pm 1.35) \times 10^{-1}$ & $(6.16 \pm 1.18) \times 10^{-1}$ & $(3.46 \pm 0.15) \times 10^{-1}$ & $(1.21 \pm 0.07) \times 10^{-1}$ & $(2.57 \pm 0.08) \times 10^{-2}$ \\
			KAN + augmentation & $(5.98 \pm 0.67) \times 10^{-1}$ & $(4.72 \pm 1.52) \times 10^{-1}$ & $(9.07 \pm 2.36) \times 10^{-2}$ & $(5.61 \pm 0.70) \times 10^{-2}$ & $(1.58 \pm 0.06) \times 10^{-1}$ \\
			\midrule
			EKAN-$\mathrm{SO}^+(1,3)$ (Ours) & $\mathbf{(6.86 \pm 6.28) \times 10^{-3}}$ & $(1.85 \pm 1.75) \times 10^{-3}$ & $\mathbf{(2.01 \pm 1.93) \times 10^{-5}}$ & $(1.93 \pm 1.11) \times 10^{-5}$ & $(4.29 \pm 3.38) \times 10^{-6}$ \\
			EKAN-$\mathrm{SO}(1,3)$ (Ours) & $(\mathbf{6.86 \pm 6.27) \times 10^{-3}}$ & $(1.85 \pm 1.75) \times 10^{-3}$ & $(2.06 \pm 1.88) \times 10^{-5}$ & $(2.17 \pm 1.51) \times 10^{-5}$ & $(3.85 \pm 2.77) \times 10^{-6}$ \\
			EKAN-$\mathrm{O}(1,3)$ (Ours) & $(7.77 \pm 5.85) \times 10^{-3}$ & $\mathbf{(1.64 \pm 1.87) \times 10^{-3}}$ & $(2.85 \pm 3.09) \times 10^{-5}$ & $\mathbf{(7.31 \pm 4.15) \times 10^{-6}}$ & $\mathbf{(3.81 \pm 2.83) \times 10^{-6}}$ \\ 
			\bottomrule
	\end{tabular}}
	\vskip -0.06in
\end{table*}

\begin{table*}[t]
	\centering
	\caption{Test MSE of different models with different numbers of parameters on the three-body problem dataset. We present the results in the format of mean $\pm$ std.}
	\label{tab:three body}
	\vskip 0.15in
	\resizebox{\linewidth}{!}{
		\begin{tabular}{l|lllll}
			\toprule
			\multirow{2}{*}{Models} & \multicolumn{5}{c}{Number of parameters} \\ 
			& $10^{4.5}$ & $10^{4.75}$ & $10^5$ & $10^{5.25}$ & $10^{5.5}$ \\
			\midrule
			MLP & $(4.84 \pm 0.19) \times 10^{-3}$ & $(4.70 \pm 0.30) \times 10^{-3}$ & $(4.60 \pm 0.12) \times 10^{-3}$ & $(4.17 \pm 0.24) \times 10^{-3}$ & $(4.24 \pm 0.27) \times 10^{-3}$ \\
			MLP + augmentation & $(9.43 \pm 0.88) \times 10^{-3}$ & $(9.65 \pm 0.54) \times 10^{-3}$ & $(1.01 \pm 0.09) \times 10^{-2}$ & $(9.89 \pm 0.65) \times 10^{-3}$ & $(9.91 \pm 0.46) \times 10^{-3}$ \\
			\midrule
			EMLP-$\mathrm{SO}(2)$ & $(2.28 \pm 1.17) \times 10^{-3}$ & $(6.87 \pm 5.29) \times 10^{-3}$ & $(3.55 \pm 1.59) \times 10^{-3}$ & $(2.01 \pm 1.09) \times 10^{-3}$ & $(5.34 \pm 3.78) \times 10^{-3}$ \\
			EMLP-$\mathrm{O}(2)$ & $(7.72 \pm 8.71) \times 10^{-3}$ & $(1.18 \pm 0.22) \times 10^{-3}$ & $(1.42 \pm 1.86) \times 10^{-2}$ & $(7.37 \pm 7.60) \times 10^{-3}$ & $(1.37 \pm 0.07) \times 10^{-3}$ \\
			\midrule
			KAN & $(4.32 \pm 3.08) \times 10^{-1}$ & $(2.21 \pm 0.65) \times 10^{-2}$ & $(1.18 \pm 0.18) \times 10^{-2}$ & $(1.23 \pm 0.34) \times 10^{-2}$ & $(9.15 \pm 1.76) \times 10^{-3}$ \\
			KAN + augmentation & $(7.94 \pm 0.15) \times 10^{-3}$ & $(7.15 \pm 0.24) \times 10^{-3}$ & $(6.91 \pm 0.34) \times 10^{-3}$ & $(6.88 \pm 0.06) \times 10^{-3}$ & $(6.76 \pm 0.12) \times 10^{-3}$ \\
			\midrule
			EKAN-$\mathrm{SO}(2)$ (Ours) & $\mathbf{(1.12 \pm 0.13) \times 10^{-3}}$ & $\mathbf{(7.06 \pm 0.65) \times 10^{-4}}$ & $\mathbf{(6.09 \pm 0.27) \times 10^{-4}}$ & $\mathbf{(4.26 \pm 0.19) \times 10^{-4}}$ & $\mathbf{(4.84 \pm 0.68) \times 10^{-4}}$ \\
			EKAN-$\mathrm{O}(2)$ (Ours) & $(1.48 \pm 0.37) \times 10^{-3}$ & $(1.12 \pm 0.24) \times 10^{-3}$ & $(7.91 \pm 0.52) \times 10^{-4}$ & $(6.06 \pm 0.36) \times 10^{-4}$ & $(6.02 \pm 0.88) \times 10^{-4}$ \\
			\bottomrule
	\end{tabular}}
	\vskip -0.06in
\end{table*}

In this section, we evaluate the performance of EKAN on regression and classification tasks with known symmetries. Compared with general models such as MLPs, KANs, and equivariant architectures like EMLP \cite{finzi2021practical} and CGENN \cite{ruhe2023clifford}, EKAN achieves lower test loss and higher test accuracy with smaller datasets or fewer parameters. Additionally, for all trained models in this section, we evaluate their equivariant errors in \cref{sec:equivariant loss}.

\subsection{Particle Scattering}

In electron-muon scattering, we can observe the four-momenta of the incoming electron, incoming muon, outgoing electron, and outgoing muon, denoted as $q^\mu, p^\mu, \tilde{q}^\mu, \tilde{p}^\mu \in \mathbb{R}^4$, respectively. We aim to predict the matrix element, which is proportional to the cross-section \cite{finzi2021practical}: 
$
	|\mathcal{M}|^2 \propto [p^\mu \tilde{p}^\nu - (p^\alpha \tilde{p}_\alpha - p^\alpha p_\alpha) g^{\mu \nu}] [q_\mu \tilde{q}_\nu - (q^\alpha \tilde{q}_\alpha - q^\alpha q_\alpha) g_{\mu \nu}]
$. 
According to Einstein's summation convention, in a monomial, if an index appears once as a superscript and once as a subscript, it indicates summation over that index. The metric tensor is given by $g_{\mu \nu} = g^{\mu \nu} = \mathrm{diag}(1, -1, -1, -1)$, and $a_\mu = g_{\mu \nu} a^\nu = (a^0, -a^1, -a^2, -a^3)$. The matrix element is invariant under Lorentz transformations. In other words, this task exhibits $\mathrm{O}(1, 3)$ invariance (see \cref{sec:o13} for more details), with the feature space $U_i = 4T_1$ and the label space $U_o = T_0$. The data generation process for particle scattering is entirely consistent with that in EMLP \cite{finzi2021practical}.

We embed the group $\mathrm{O}(1,3)$ and its subgroups $\mathrm{SO}^+(1,3)$ and $\mathrm{SO}(1,3)$ equivariance into EKAN. Models are evaluated on synthetic datasets with different training set sizes, which are generated by sampling $q^\mu, p^\mu, \tilde{q}^\mu, \tilde{p}^\mu \sim \mathcal{N}(0, \frac{1}{4^2})$. Both EKAN and KAN have the depth of $L=2$, the spline order of $k=3$, and grid intervals of $G=3$. Although the lift layer increases the parameter overhead, we set the width of the middle layer in EKAN to $n_1=1000$ (shape as $[16, 1000, 1]$, and the software will automatically calculate the appropriate feature space structure based on the user-specified dimension), and set the width of the middle layer in KAN to $n_1=3840$ (shape as $[16, 3840, 1]$) to keep the parameter count similar. Both EMLP and MLP have the depth of $L=4$ and the middle layer width of $n_1=n_2=n_3=384$ (shape as $[16, 384, 384, 384, 1]$). In these settings, EKAN ($435$k) has fewer parameters than EMLP ($450$k) and KAN ($461$k). We provide more implementation details in \cref{sec:particle scattering details}.

We repeat experiments with three different random seeds and report the mean $\pm$ std of the test MSE in \cref{tab:particle scattering}. The results of EMLP and MLP come from the original paper \cite{finzi2021practical} under the same settings. Although EMLP performs better than non-equivariant models, our EKAN with different group equivariance further surpasses it comprehensively, especially showing an orders-of-magnitude advantage on large datasets (training set size $\geq 10^3$). Moreover, our EKAN with just $10^3$ training samples achieves approximately $10\%$ of the test MSE of EMLP with $10^4$ training samples.

\subsection{Three-Body Problem}

\begin{table*}[t]
	\centering
	\caption{Test accuracy ($\%$) of different models on the top quark tagging dataset ($n_{comp} = 3$) with different training set sizes. We present the results in the format of mean $\pm$ std.}
	\label{tab:top quark tagging}
	\vskip 0.15in
	\resizebox{\linewidth}{!}{
		\begin{tabular}{l|lllll|l}
			\toprule
			\multirow{2}{*}{Models} & \multicolumn{5}{|c|}{Training set size} & \multirow{2}{*}{Parameters} \\
			& $10^2$ & $10^{2.5}$ & $10^3$ & $10^{3.5}$ & $10^4$ & \\
			\midrule
			MLP & $52.96 \pm 0.21$ & $54.31 \pm 0.48$ & $57.47 \pm 0.32$ & $62.72 \pm 0.60$ & $69.30 \pm 1.03$ & \multirow{2}{*}{$83$K} \\
			MLP + augmentation & $52.72 \pm 0.35$ & $60.16 \pm 0.41$ & $61.29 \pm 1.66$ & $58.71 \pm 1.37$ & $59.45 \pm 1.90$ &  \\
			\midrule
			EMLP-$\mathrm{SO}^+(1,3)$ & $65.48 \pm 1.21$ & $72.59 \pm 0.84$ & $74.40 \pm 0.26$ & $76.34 \pm 0.14$ & $77.10 \pm 0.02$ & \multirow{3}{*}{$133$K}\\
			EMLP-$\mathrm{SO}(1,3)$ & $61.86 \pm 5.92$ & $73.09 \pm 0.92$ & $74.37 \pm 0.17$ & $76.46 \pm 0.12$ & $\mathbf{77.12 \pm 0.04}$ & \\
			EMLP-$\mathrm{O}(1,3)$ & $62.66 \pm 7.35$ & $73.65 \pm 1.01$ & $74.22 \pm 0.53$ & $76.26 \pm 0.05$ & $\mathbf{77.12 \pm 0.04}$ & \\
			\midrule
			KAN & $49.89 \pm 0.39$ & $49.91 \pm 0.43$ & $49.89 \pm 0.37$ & $50.00 \pm 0.02$ & $49.84 \pm 0.25$ & \multirow{2}{*}{$35$K} \\
			KAN + augmentation & $49.83 \pm 0.28$ & $50.09 \pm 0.09$ & $49.73 \pm 0.40$ & $50.02 \pm 0.01$ & $50.27 \pm 0.67$ & \\
			\midrule
			CGENN & $62.63 \pm 2.24$ & $68.74 \pm 0.77$ & $70.29 \pm 1.29$ & $75.10 \pm 0.47$ & $77.05 \pm 0.03$ & $85$K \\
			\midrule
			EKAN-$\mathrm{SO}^+(1,3)$ (Ours) & $\mathbf{71.92 \pm 0.88}$ & $\mathbf{73.98 \pm 0.39}$ & $\mathbf{76.15 \pm 0.11}$ & $\mathbf{76.69 \pm 0.08}$ & $76.93 \pm 0.02$ & \multirow{3}{*}{$34$K} \\
			EKAN-$\mathrm{SO}(1,3)$ (Ours) & $70.49 \pm 2.85$ & $73.96 \pm 0.37$ & $\mathbf{76.15 \pm 0.11}$ & $\mathbf{76.69 \pm 0.08}$ & $76.93 \pm 0.02$ & \\
			EKAN-$\mathrm{O}(1,3)$ (Ours) & $71.68 \pm 1.21$ & $73.95 \pm 0.36$ & $\mathbf{76.15 \pm 0.11}$ & $\mathbf{76.69 \pm 0.07}$ & $76.93 \pm 0.03$ & \\ 
			\bottomrule
	\end{tabular}}
	\vskip -0.06in
\end{table*}

The study of the three-body problem on a plane \cite{greydanus2019hamiltonian} focuses on the motion of three particles, with their center of mass at the origin, under the influence of gravity. Their trajectories are chaotic and cannot be described by an analytical solution. Specifically, we observe the motion states of three particles over the past four time steps, denoted as $\{q_{i1}, p_{i1}, q_{i2}, p_{i2}, q_{i3}, p_{i3}\}_{i=t-4}^{t-1}$, and predict their motion states at time $t$, denoted as $\{q_{t1}, p_{t1}, q_{t2}, p_{t2}, q_{t3}, p_{t3}\}$. Here, $q_{ij} \in \mathbb{R}^2$ and $p_{ij} \in \mathbb{R}^2$ indicate the position and momentum coordinates of the $j$-th particle at time $i$, respectively. The dataset contains $30$k training samples and $30$k test samples. When the input motion states are simultaneously rotated by a certain angle or reflected along a specific axis, the output motion states should undergo the same transformation. Therefore, this task has $\mathrm{O}(2)$ equivariance (see \cref{sec:o2} for more details), with the feature space $U_i = 4 \times 6 T_1 = 24 T_1$ and the label space $U_o = 6 T_1$. The dataset for the three-body problem comes from HNN \cite{greydanus2019hamiltonian}, and we note that they predict the motion trajectories of three particles, differing from the setup in CGENN \cite{ruhe2023clifford}, which addresses an $N$-body problem involving five particles.

We embed the group $\mathrm{O}(2)$ and its subgroup $\mathrm{SO}(2)$ equivariance into EKAN and EMLP. Both EKAN and KAN have the depth of $L=2$, the spline order of $k=3$, and grid intervals of $G=3$, while both EMLP and MLP have the depth of $L=4$. The number of parameters is controlled by adjusting the middle layer width $N$ for comparison (the shape of EKAN and KAN is $[48, N, 12]$, while the shape of EMLP and MLP is $[48, N, N, N, 12]$). More implementation details can be found in \cref{sec:three body details}.

The mean $\pm$ std of the test MSE over three runs with different random seeds are reported in \cref{tab:three body}. Our EKAN-$\mathrm{SO}(2)$ and EKAN-$\mathrm{O}(2)$ consistently outperform baseline models with the same number of parameters, often by orders of magnitude. Notably, our EKAN with $10^{4.5}$ parameters achieves comparable or even lower test MSE than baseline models with $10^{5.5}$ parameters, saving $90\%$ of the parameter overhead.

\subsection{Top Quark Tagging}

The research on top quark tagging \cite{kasieczka2019top} involves classifying hadronic tops from the QCD background. In particle collision experiments, top quark decays or other events produce several jet constituents. We observe the four-momenta $p_1^\mu, p_2^\mu, p_3^\mu \in \mathbb{R}^4$ of the three jet constituents with the highest transverse momentum $p_T$, and predict the event label ($1$ for top, $0$ for QCD). The category of the event will not change when all jet constituents undergo the same Lorentz transformation. Consequently, this task possesses $\mathrm{O}(1,3)$ invariance (see \cref{sec:o13} for more details), with the feature space $U_i = 3T_1$ and the label space $U_o = T_0$. The top quark tagging dataset is sourced from \citet{kasieczka2019top}, and we assume that only the $n_{comp}=3$ jet constituents with the highest transverse momentum $p_T$ are observed, which differs from the setup in CGENN \cite{ruhe2023clifford} and LorentzNet \cite{gong2022efficient}, where all $n_{comp}=200$ jet constituents are available.

Similar to particle scattering,  we embed the group $\mathrm{O}(1,3)$ and its subgroups $\mathrm{SO}^+(1,3)$ and $\mathrm{SO}(1,3)$ equivariance into EKAN and EMLP. Furthermore, we embed $\mathrm{O}(1,3)$ equivariance into CGENN \cite{ruhe2023clifford}. We sample training sets of different sizes from the original dataset for evaluation. Both EKAN and KAN have the depth of $L=2$, the spline order of $k=3$, and grid intervals of $G=3$. We set the width of the middle layer in EKAN to $n_1=200$ (shape as $[12, 200, 1]$) and the width of the middle layer in KAN to $n_1=384$ (shape as $[12, 384, 1]$) to control the number of parameters. Both MLP, EMLP and CGENN have the depth of $L=4$ and the middle layer width $n_1=n_2=n_3=200$ (shape as $[12, 200, 200, 200, 1]$). We apply the sigmoid function to the model's output and use BCE as the loss function for binary classification. More implementation details are provided in \cref{sec:top quark tagging details}.

We report the mean $\pm$ std of the test accuracy over three runs with different random seeds, as well as the number of parameters of the models in \cref{tab:top quark tagging}. Since we have not observed all the jet constituents, the relationship between the labels and input features cannot be accurately expressed as an explicit function. In this case of non-symbolic formula representation, KAN cannot achieve higher accuracy with fewer parameters than MLP as expected. On the other hand, our EKAN achieves comparable results with EMLP and CGENN using fewer than $40\%$ of the parameters, improving test accuracy by $0.23\% \sim 6.44\%$ on small datasets (training set size $<10^4$), while decreasing by $0.19\%$ on large datasets (training set size $=10^4$).

\section{Conclusion}
\label{sec:conclusion}

To our knowledge, this work is the first attempt to combine equivariance and KANs. We view the KAN layer as a combination of spline functions and linear weights, and accordingly define the (gated) input space, post-activation space, and (gated) output space of the EKAN layer. Gated basis functions ensure the equivariance between the gated input space and the post-activation space, while equivariant linear weights guarantee the equivariance between the post-activation space and the gated output space. The prior work has demonstrated that ``EMLP $>$ MLP" on tasks with symmetries and ``KAN $>$ MLP" on symbolic formula representation tasks. Our experimental results further indicate that on symbolic formula representation tasks with symmetries, ``EKAN $>$ EMLP" and ``EKAN $>$ KAN". Moreover, on non-symbolic formula representation tasks with symmetries, although it may be that ``KAN $<$ MLP", we show that ``EKAN $>$ EMLP". We expect that EKAN can become a general framework for applying KANs to more fields, such as computer vision and natural language processing, just as EMLP unifies classic works like CNNs and DeepSets.

\section{Limitations}

One limitation of our approach is that the use of gating mechanisms can reduce the expressive power of EKAN, as discussed in Appendix D of EMLP \cite{finzi2021practical}. However, due to the inherently complex structure of KANs (which involve intricate B-spline formulations), introducing symmetries into KANs is challenging. Therefore, for the sake of simplicity and clarity, we opt to incorporate gated non-linearities in a hierarchical manner. We anticipate that future improvements could enhance the expressive power and flexibility of EKAN.

Another limitation is that when data is sufficient, the advantages of EKAN diminish compared with baselines. We provide additional experimental results in \cref{sec:additional} to illustrate this point. This implies that EKAN enhances generalization on small data but does not unlock the upper limit of expressive power.

\section*{Acknowledgements}

Z. Lin was supported by National Key R\&D Program of China (2022ZD0160300) and the NSF China (No. 62276004). 

\section*{Impact Statement}

This paper presents work whose goal is to advance the field of 
Machine Learning. There are many potential societal consequences 
of our work, none which we feel must be specifically highlighted here.

\nocite{langley00}

\bibliography{example_paper}
\bibliographystyle{icml2025}

\newpage
\appendix
\onecolumn
\section{Common Matrix Groups and Their Generators}
\label{sec:common matrix groups}

\subsection{Groups $\mathrm{SO}(2)$ and $\mathrm{O}(2)$}
\label{sec:o2}

The group $\mathrm{SO}(2)$ represents rotation transformations in two-dimensional space. Its group elements can be expressed as:
\begin{equation}
	R(\theta) = 
	\begin{bmatrix}
		\cos \theta & -\sin \theta \\
		\sin \theta & \cos \theta
	\end{bmatrix}.
\end{equation}
It corresponds to an infinitesimal generator:
\begin{equation}
	A_1 = 
	\begin{bmatrix}
		0 & -1 \\
		1 & 0
	\end{bmatrix}.
\end{equation}
Then we can obtain the group elements through the exponential map $R(\theta) = \exp(\theta A_1)$.

The group $\mathrm{O}(2)$ represents orthogonal transformations in two-dimensional space, including rotations and reflections. Based on the group $\mathrm{SO}(2)$, it has an additional discrete generator:
\begin{equation}
	h_1 = 
	\begin{bmatrix}
		1 & 0 \\
		0 & -1
	\end{bmatrix}.
\end{equation}

\subsection{Groups $\mathrm{SO}^+(1,3)$, $\mathrm{SO}(1,3)$, and $\mathrm{O}(1,3)$}
\label{sec:o13}

The group $\mathrm{SO}^+(1,3)$ represents Lorentz transformations that preserve both orientation and the direction of time. It includes six infinitesimal generators:
\begin{equation}
	\begin{cases}
		A_1 = 
		\begin{bmatrix}
			0 & 1 & 0 & 0 \\
			1 & 0 & 0 & 0 \\
			0 & 0 & 0 & 0 \\
			0 & 0 & 0 & 0
		\end{bmatrix},
		A_2 = 
		\begin{bmatrix}
			0 & 0 & 1 & 0 \\
			0 & 0 & 0 & 0 \\
			1 & 0 & 0 & 0 \\
			0 & 0 & 0 & 0
		\end{bmatrix},
		A_3 = 
		\begin{bmatrix}
			0 & 0 & 0 & 1 \\
			0 & 0 & 0 & 0 \\
			0 & 0 & 0 & 0 \\
			1 & 0 & 0 & 0
		\end{bmatrix}, \\
		A_4 = 
		\begin{bmatrix}
			0 & 0 & 0 & 0 \\
			0 & 0 & 1 & 0 \\
			0 & -1 & 0 & 0 \\
			0 & 0 & 0 & 0
		\end{bmatrix},
		A_5 = 
		\begin{bmatrix}
			0 & 0 & 0 & 0 \\
			0 & 0 & 0 & 1 \\
			0 & 0 & 0 & 0 \\
			0 & -1 & 0 & 0
		\end{bmatrix},
		A_6 = 
		\begin{bmatrix}
			0 & 0 & 0 & 0 \\
			0 & 0 & 0 & 0 \\
			0 & 0 & 0 & 1 \\
			0 & 0 & -1 & 0
		\end{bmatrix},
	\end{cases}
\end{equation}
where $A_1,A_2,A_3$ correspond to Lorentz boosts, and $A_4,A_5,A_6$ correspond to spatial rotations.

The group $\mathrm{SO}(1,3)$ represents Lorentz transformations that preserve orientation. Based on the group $\mathrm{SO}^+(1,3)$, it has an additional discrete generator:
\begin{equation}
	h_1 = 
	\begin{bmatrix}
		-1 & 0 & 0 & 0 \\
		0 & -1 & 0 & 0 \\
		0 & 0 & -1 & 0 \\
		0 & 0 & 0 & -1
	\end{bmatrix},
\end{equation}
which corresponds to orientation reversal.

The group $\mathrm{O}(1,3)$ represents all Lorentz transformations. Based on the group $\mathrm{SO}(1,3)$, it has an additional discrete generator:
\begin{equation}
	h_2 = 
	\begin{bmatrix}
		-1 & 0 & 0 & 0 \\
		0 & 1 & 0 & 0 \\
		0 & 0 & 1 & 0 \\
		0 & 0 & 0 & 1
	\end{bmatrix},
\end{equation}
which corresponds to time reversal.

\section{Concrete Examples of Space Structure}
\label{sec:concrete}

First, let's intuitively understand the dual ($*$), direct sum ($\oplus$), and tensor product ($\otimes$) operations. Consider two vector spaces $X=\mathbb{R}^2,Y=\mathbb{R}^3$, and vectors $x=(x_1,x_2)\in X,y=(y_1,y_2,y_3)\in Y$. Then, all $x\oplus y=(x_1,x_2,y_1,y_2,y_3)$ form the space $X\oplus Y=\mathbb{R}^5$, all $x\otimes y=(x_1y,x_2y)=(x_1y_1,x_1y_2,x_1y_3,x_2y_1,x_2y_2,x_2y_3)$ form the space $X\otimes Y=\mathbb{R}^6$, and all the coefficients $\mathrm{vec}(W)$ of the linear maps $Wx=y$ form the space $Y\otimes X^*=\mathbb{R}^6$.

Then, if we define how the group transformation acts on $X,Y$, the form of its action on these composite spaces can naturally be derived. \cref{eq:rules} provides the derivation rules. This paper assumes the group to be a matrix group. If a group element $g\in G$ acts on a vector $x\in X$ in the form of its corresponding linear transformation $gx$, then we call $X$ the base vector space of $G$.

We have defined the ``addition'' and ``multiplication'' between spaces. Thus, for the base vector space $V$ of a group $G$, any complex space structure can be organized into the form of a ``polynomial'' with respect to $V$, which is the origin of \cref{eq:space}. Note that \cref{eq:space} simultaneously defines $T(p,q)=V^p\otimes (V^*)^q$. We abbreviate $T(p,0)$ as $T_p$.

In the vast majority of scenarios, the feature space of a dataset takes simple forms such as vector stacking $cT_1=cV$ or matrices $T_2=V\otimes V$, while complex spaces like $T(p,q)$ rarely appear. However, the latent spaces between equivariant layers can be highly intricate (e.g., they may have $384$ dimensions), and their decomposition forms with respect to the base vector space $V$ often involve $T(p,q)$ (the decomposition is automatically handled by the software based on dimensionality, as described in \cref{sec:experiments}). The practical implication is that, according to the rules of \cref{eq:rules}, we define how group transformations operate on the latent spaces.

It is not correct that any (continuous, real, finite-dimensional) representation $U$ of a matrix group can be written as in \cref{eq:space}. It can be shown (in the case of a compact Lie group for example) that $U$ is a subrepresentation of the direct sum on the right hand side. However, it is worth noting that, for the vast majority of practical applications, considering input/output representations of this form should suffice.

\section{Equivariant Multi-Layer Perceptrons (EMLP)}
\label{sec:emlp}

EMLP \cite{finzi2021practical} embeds matrix group equivariance into MLPs layerwise. Given the input space $U_i$ and output space $U_o$, the linear weight matrix $W \in U_o \otimes U_i^*$ should satisfy \cref{eq:equivariance}, i.e., $\forall g \in \widetilde{G}, v_i \in U_i: \rho_o(g) W v_i = W \rho_i(g) v_i$. So the coefficients of each term in $v_i$ are equal $\forall g \in \widetilde{G}: \rho_o(g) W = W \rho_i(g)$. Flattening the linear weight matrix $W$ into a vector, we have $\forall g \in \widetilde{G}: \left[ \rho_o(g) \otimes \rho_i(g^{-1})^\top \right] \mathrm{vec}(W) = \mathrm{vec}(W)$. Combined with \cref{eq:rules}, the linear weight matrix $W$ is invariant in the space $U_o \otimes U_i^*$:
\begin{equation}
	\label{eq:emlp1}
	\forall g \in \widetilde{G}: \quad \rho_{o, i}(g) \mathrm{vec}(W) = \mathrm{vec}(W),
\end{equation}
where $\rho_{o, i} = \rho_o \otimes \rho_i^*$ is the group representation of $U_o \otimes U_i^*$. Decomposing the group representation $\rho_{o, i}(g)$ into discrete and infinitesimal generators as shown in \cref{eq:representations}, \cref{eq:emlp1} is equivalent to the following constraint:
\begin{equation}
	\label{eq:emlp2}
	C \mathrm{vec}(W) = 
	\begin{bmatrix}
		\mathrm{d} \rho_{o, i}(A_1) \\
		\vdots \\
		\mathrm{d} \rho_{o, i}(A_D) \\
		\rho_{o, i}(h_1) - I \\
		\vdots \\
		\rho_{o, i}(h_M) - I
	\end{bmatrix}
	\mathrm{vec}(W) = 0.
\end{equation}
By performing singular value decomposition (SVD) on the coefficient matrix $C$, we can obtain its nullspace, which corresponds to the subspace where the equivariant linear weights reside.

\section{Proof of \cref{thm:gimequi}}
\label{sec:proof}

\gimequi*

\begin{proof}
	Let $v_{m, b} = f_b(v_{gi})$, then \cref{eq:mvector1} can be written as:
	\begin{equation}
		\label{eq:lhs1}
		f(v_{gi}) = \bigoplus_{b=0}^{G+k} f_b(v_{gi}).
	\end{equation}
	Using the structure of $U_m$ shown in \cref{eq:mspace} and applying the rules from \cref{eq:rules}, we can derive the group representation of $U_m$:
	\begin{equation}
		\label{eq:lhs2}
		\rho_m(g) = \bigoplus_{b=0}^{G+k} \rho_i(g),
	\end{equation}
	where $\rho_i$ is the group representation of $U_i$. Combining \cref{eq:lhs1,eq:lhs2}, we have:
	\begin{equation}
		\label{eq:lhs}
		\rho_m(g) f(v_{gi}) = \bigoplus_{b=0}^{G+k} \rho_i(g) f_b(v_{gi}).
	\end{equation}
	Note that the group transformation in the scalar space $T_0$ is the identity transformation, then we can obtain the group representation of $U_{gi}$ from \cref{eq:giospace}:
	\begin{equation}
		\label{eq:rhs1}
		\rho_{gi}(g) = I_{c_i} \oplus \left[ \bigoplus_{a=1}^{A_i} \rho_{i, a}(g) \right] \oplus I_{A_i},
	\end{equation}
	where $\rho_{i, a}$ is the group representation of $T(p_{i, a}, q_{i, a})$. Therefore, applying the group transformation to $v_{gi}$ in \cref{eq:givector} results in:
	\begin{equation}
		\label{eq:rhs2}
		\rho_{gi}(g) v_{gi} = \left( \bigoplus_{a=1}^{c_i} s_{i, a} \right) \oplus \left( \bigoplus_{a=1}^{A_i} \rho_{i, a}(g) v_{i, a} \right) \oplus \left( \bigoplus_{a=1}^{A_i} s_{i, a}' \right).
	\end{equation}
	Substitute \cref{eq:rhs2} into \cref{eq:mvector2}:
	\begin{equation}
		\label{eq:rhs3}
		f_b(\rho_{gi}(g) v_{gi}) = 
		\begin{cases}
			\left[ \bigoplus_{a=1}^{c_i} s_{i, a} B_b(s_{i, a}) \right] \oplus \left[ \bigoplus_{a=1}^{A_i} \rho_{i, a}(g) v_{i, a} B_b(s_{i, a}') \right], \quad b < G+k, \\
			\left[ \bigoplus_{a=1}^{c_i} s_{i, a} \mathrm{silu}(s_{i, a}) \right] \oplus \left[ \bigoplus_{a=1}^{A_i} \rho_{i, a}(g) v_{i, a} \mathrm{silu}(s_{i, a}') \right], \quad b = G+k.
		\end{cases}
	\end{equation}
	Similar to \cref{eq:rhs1}, we can derive the group representation of $U_i$ from \cref{eq:iospace}:
	\begin{equation}
		\label{eq:rhs4}
		\rho_i(g) = I_{c_i} \oplus \left[ \bigoplus_{a=1}^{A_i} \rho_{i, a}(g) \right].
	\end{equation}
	Note that the right-hand side of \cref{eq:rhs3} is the result of applying $\rho_i(g)$ to $f_b(v_{gi})$, which means:
	\begin{equation}
		\label{eq:rhs5}
		f_b(\rho_{gi}(g) v_{gi}) = \rho_i(g) f_b(v_{gi}).
	\end{equation}
	Substitute \cref{eq:rhs5} into \cref{eq:lhs1}:
	\begin{equation}
		\label{eq:rhs}
		f(\rho_{gi}(g) v_{gi}) = \bigoplus_{b=0}^{G+k} \rho_i(g) f_b(v_{gi}).
	\end{equation}
	Combining \cref{eq:lhs,eq:rhs}, \cref{eq:gimequi} is proven.
\end{proof}

\section{Grid Update}
\label{sec:grid update}

Similar to KANs \cite{liu2024kan}, EKAN updates grids based on the input activations. At the same time, the linear weights should also be updated in order to keep the output features unchanged. Let the post-activation values of the grids before and after the update be denoted as $V_m, V_m' \in \mathbb{R}^{N \times (G+k+1) d_i}$, where $N$ is the number of samples. We first project the linear weight blocks into the equivariant subspace as $\mathrm{vec}(\widetilde{W}_b) = P \mathrm{vec}(W_b)$ and compute the output activations $V_{go} = V_m \widetilde{W}^\top = \sum_{b=0}^{G+k} V_{m, b} \widetilde{W}_b^\top$, where $P$ is the equivariant projector obtained from \cref{eq:linear-constraint}. Then, the updated equivariant linear weights $\widetilde{W}'$ should satisfy $V_{go} = V_m' \widetilde{W}'^\top$, and we have $\widetilde{W}' = V_{go}^\top (V_m'^\top)^\dagger$. We finally restore the updated linear weight blocks $\mathrm{vec}(W_b') = P^\dagger \mathrm{vec}(\widetilde{W}_b')$.

\section{Equivariant Error Evaluation}
\label{sec:equivariant loss}

For all trained models $f_\theta$ in \cref{sec:experiments}, we use $\mathcal{L}_{equi}=\mathbb{E}_{x,g}\|\rho_o(g)f_\theta(x)-f_\theta(\rho_i(g)x)\|^2$ to evaluate their equivariant errors. The experimental results are presented in \cref{tab:particle equi,tab:3body equi,tab:top equi}, which indicate that our EKAN and EMLP can structurally guarantee strict equivariance, whereas non-equivariant models cannot. For particle scattering, the results of EMLP and MLP are sourced from the original paper \cite{finzi2021practical}, so we do not present their equivariant errors here.

\begin{table*}[htbp]
	\centering
	\caption{Equivariant error of different models on the particle scattering dataset with different training set sizes. We present the results in the format of mean $\pm$ std.}
	\label{tab:particle equi}
	\vskip 0.15in
	\resizebox{\linewidth}{!}{
		\begin{tabular}{l|lllll}
			\toprule
			\multirow{2}{*}{Models} & \multicolumn{5}{c}{Training set size} \\ 
			& $10^2$ & $10^{2.5}$ & $10^3$ & $10^{3.5}$ & $10^4$ \\
			\midrule
			KAN & $(3.14 \pm 0.08) \times 10^{-1}$ & $1.02 \pm 0.69$ & $(4.88 \pm 2.02) \times 10^{-1}$ & $(6.90 \pm 3.25) \times 10^{-1}$ & $(1.64 \pm 0.41) \times 10^{-1}$ \\
			KAN + augmentation & $(2.78 \pm 0.30) \times 10^{-1}$ & $(8.81 \pm 8.41) \times 10^{-1}$ & $(1.08 \pm 0.55) \times 10^{-1}$ & $(1.32 \pm 0.52) \times 10^{-1}$ & $(2.36 \pm 0.17) \times 10^{-1}$ \\
			\midrule
			EKAN-$\mathrm{SO}^+(1,3)$ (Ours) & $(6.13 \pm 1.76) \times 10^{-14}$ & $(9.32 \pm 2.75) \times 10^{-14}$ & $(7.48 \pm 1.61) \times 10^{-14}$ & $(8.33 \pm 0.69) \times 10^{-14}$ & $(6.57 \pm 0.06) \times 10^{-14}$ \\
			EKAN-$\mathrm{SO}(1,3)$ (Ours) & $(1.13 \pm 0.78) \times 10^{-13}$ & $(7.64 \pm 1.67) \times 10^{-14}$ & $(6.86 \pm 0.11) \times 10^{-14}$ & $(7.98 \pm 0.45) \times 10^{-14}$ & $(6.62 \pm 0.89) \times 10^{-14}$ \\
			EKAN-$\mathrm{O}(1,3)$ (Ours) & $(5.67 \pm 0.87) \times 10^{-14}$ & $(7.95 \pm 2.67) \times 10^{-14}$ & $(5.88 \pm 0.81) \times 10^{-14}$ & $(9.00 \pm 0.63) \times 10^{-14}$ & $(8.43 \pm 0.18) \times 10^{-14}$ \\ 
			\bottomrule
	\end{tabular}}
	\vskip -0.06in
\end{table*}

\begin{table*}[htbp]
	\centering
	\caption{Equivariant error of different models with different numbers of parameters on the three-body problem dataset. We present the results in the format of mean $\pm$ std.}
	\label{tab:3body equi}
	\vskip 0.15in
	\resizebox{\linewidth}{!}{
		\begin{tabular}{l|lllll}
			\toprule
			\multirow{2}{*}{Models} & \multicolumn{5}{c}{Number of parameters} \\ 
			& $10^{4.5}$ & $10^{4.75}$ & $10^5$ & $10^{5.25}$ & $10^{5.5}$ \\
			\midrule
			MLP & $(1.44 \pm 0.13) \times 10^{-3}$ & $(1.51 \pm 0.14) \times 10^{-3}$ & $(1.54 \pm 0.03) \times 10^{-3}$ & $(1.33 \pm 0.17) \times 10^{-3}$ & $(1.33 \pm 0.10) \times 10^{-3}$ \\
			MLP + augmentation & $(3.16 \pm 0.32) \times 10^{-3}$ & $(3.47 \pm 0.24) \times 10^{-3}$ & $(3.43 \pm 0.24) \times 10^{-3}$ & $(3.35 \pm 0.16) \times 10^{-3}$ & $(3.32 \pm 0.26) \times 10^{-3}$ \\
			\midrule
			EMLP-$\mathrm{SO}(2)$ & $(2.80 \pm 2.18) \times 10^{-13}$ & $(1.57 \pm 1.39) \times 10^{-13}$ & $(2.17 \pm 0.37) \times 10^{-14}$ & $(2.99 \pm 2.51) \times 10^{-14}$ & $(1.87 \pm 0.61) \times 10^{-14}$ \\
			EMLP-$\mathrm{O}(2)$ & $(7.87 \pm 6.26) \times 10^{-13}$ & $(1.87 \pm 1.32) \times 10^{-12}$ & $(3.22 \pm 4.53) \times 10^{-11}$ & $(1.46 \pm 2.03) \times 10^{-12}$ & $(7.23 \pm 6.13) \times 10^{-14}$ \\
			\midrule
			KAN & $(3.00 \pm 2.28) \times 10^{-1}$ & $(1.21 \pm 0.26) \times 10^{-2}$ & $(5.90 \pm 0.78) \times 10^{-3}$ & $(5.93 \pm 1.45) \times 10^{-3}$ & $(4.03 \pm 0.81) \times 10^{-3}$ \\
			KAN + augmentation & $(2.78 \pm 0.10) \times 10^{-3}$ & $(2.49 \pm 0.11) \times 10^{-3}$ & $(2.37 \pm 0.11) \times 10^{-3}$ & $(2.35 \pm 0.11) \times 10^{-3}$ & $(2.29 \pm 0.08) \times 10^{-3}$ \\
			\midrule
			EKAN-$\mathrm{SO}(2)$ (Ours) & $(3.80 \pm 3.84) \times 10^{-13}$ & $(3.02 \pm 3.05) \times 10^{-13}$ & $(9.66 \pm 5.85) \times 10^{-14}$ & $(3.33 \pm 1.47) \times 10^{-14}$ & $(3.31 \pm 1.31) \times 10^{-14}$ \\
			EKAN-$\mathrm{O}(2)$ (Ours) & $(9.79 \pm 6.13) \times 10^{-13}$ & $(1.46 \pm 1.30) \times 10^{-13}$ & $(2.99 \pm 2.77) \times 10^{-13}$ & $(1.75 \pm 1.55) \times 10^{-13}$ & $(7.62 \pm 4.91) \times 10^{-14}$ \\
			\bottomrule
	\end{tabular}}
	\vskip -0.06in
\end{table*}

\begin{table*}[htbp]
	\centering
	\caption{Equivariant error of different models on the top quark tagging dataset ($n_{comp} = 3$) with different training set sizes. We present the results in the format of mean $\pm$ std.}
	\label{tab:top equi}
	\vskip 0.15in
	\resizebox{\linewidth}{!}{
		\begin{tabular}{l|lllll}
			\toprule
			\multirow{2}{*}{Models} & \multicolumn{5}{|c}{Training set size} \\
			& $10^2$ & $10^{2.5}$ & $10^3$ & $10^{3.5}$ & $10^4$ \\
			\midrule
			MLP & $(4.94 \pm 0.40) \times 10^{-1}$ & $(4.00 \pm 0.17) \times 10^{-1}$ & $(3.73 \pm 0.05) \times 10^{-1}$ & $(3.24 \pm 0.11) \times 10^{-1}$ & $(1.87 \pm 0.03) \times 10^{-1}$ \\
			MLP + augmentation & $(4.73 \pm 0.28) \times 10^{-1}$ & $(2.15 \pm 0.12) \times 10^{-1}$ & $(1.90 \pm 0.55) \times 10^{-1}$ & $(3.73 \pm 0.39) \times 10^{-1}$ & $(3.40 \pm 0.63) \times 10^{-1}$  \\
			\midrule
			EMLP-$\mathrm{SO}^+(1,3)$ & $(2.92 \pm 2.30) \times 10^{-6}$ & $(1.09 \pm 0.84) \times 10^{-7}$ & $(3.88 \pm 1.44) \times 10^{-9}$ & $(3.54 \pm 2.47) \times 10^{-9}$ & $(1.65 \pm 2.11) \times 10^{-8}$ \\
			EMLP-$\mathrm{SO}(1,3)$ & $(6.81 \pm 8.20) \times 10^{-7}$ & $(1.36 \pm 1.82) \times 10^{-7}$ & $(5.49 \pm 2.92) \times 10^{-9}$ & $(2.69 \pm 1.48) \times 10^{-9}$ & $(1.71 \pm 0.36) \times 10^{-9}$ \\
			EMLP-$\mathrm{O}(1,3)$ & $(2.41 \pm 1.68) \times 10^{-7}$ & $(1.86 \pm 1.80) \times 10^{-7}$ & $(6.20 \pm 0.59) \times 10^{-9}$ & $(1.14 \pm 1.36) \times 10^{-8}$ & $(1.26 \pm 0.28) \times 10^{-9}$ \\
			\midrule
			KAN & $(1.36 \pm 1.78) \times 10^{-1}$ & $(1.35 \pm 1.77) \times 10^{-1}$ & $(1.34 \pm 1.77) \times 10^{-1}$ & $(9.90 \pm 14.00) \times 10^{-5}$ & $(1.27 \pm 1.79) \times 10^{-1}$ \\
			KAN + augmentation & $(1.20 \pm 1.61) \times 10^{-1}$ & $(2.10 \pm 2.97) \times 10^{-3}$ & $(1.10 \pm 1.51) \times 10^{-1}$ & $(1.52 \pm 2.15) \times 10^{-4}$ & $(1.44 \pm 1.78) \times 10^{-1}$ \\
			\midrule
			EKAN-$\mathrm{SO}^+(1,3)$ (Ours) & $(3.41 \pm 2.99) \times 10^{-7}$ & $(1.66 \pm 0.85) \times 10^{-8}$ & $(1.62 \pm 0.51) \times 10^{-9}$ & $(1.65 \pm 1.01) \times 10^{-9}$ & $(1.64 \pm 1.53) \times 10^{-9}$ \\
			EKAN-$\mathrm{SO}(1,3)$ (Ours) & $(3.10 \pm 1.81) \times 10^{-7}$ & $(1.09 \pm 0.60) \times 10^{-8}$ & $(1.11 \pm 0.41) \times 10^{-9}$ & $(1.13 \pm 0.70) \times 10^{-9}$ & $(1.14 \pm 1.10) \times 10^{-9}$ \\
			EKAN-$\mathrm{O}(1,3)$ (Ours) & $(3.14 \pm 2.32) \times 10^{-7}$ & $(1.46 \pm 0.79) \times 10^{-8}$ & $(1.50 \pm 0.46) \times 10^{-9}$ & $(1.50 \pm 0.92) \times 10^{-9}$ & $(1.54 \pm 1.50) \times 10^{-9}$ \\ 
			\bottomrule
	\end{tabular}}
	\vskip -0.06in
\end{table*}

\section{Implementation Details}
\label{sec:implementation details}

\subsection{Particle Scattering}
\label{sec:particle scattering details}

In particle scattering, we generate training sets of different sizes, and the corresponding test sets have the same sizes as the training sets. We train EKAN using the Adan optimizer \cite{xie2024adan} with the learning rate of $3 \times 10^{-3}$ and the batch size of $500$. For datasets with the training set size $<1000$, we set the number of epochs to $7000$, while for datasets with the training set size $\geq 1000$, we set the number of epochs to $15000$, which is sufficient for the MSE loss to converge to the minimum. We perform this experiment on a single-core NVIDIA GeForce RTX 3090 GPU with available memory of $24576$ MiB.

\subsection{Three-Body Problem}
\label{sec:three body details}

In the three-body problem, we control the number of parameters by adjusting the middle layer width $N$ of the model. We list the correspondence between the model's shape and the number of parameters in \cref{tab:three body details}. We train all models using the Adan optimizer \cite{xie2024adan} with the learning rate of $3 \times 10^{-3}$, the batch size of $500$, and for $5000$ epochs. The grids of EKAN and KAN are updated every $5$ epochs and stop updating at the $50$th epoch. We perform this experiment on a single-core NVIDIA GeForce RTX 3090 GPU with available memory of $24576$ MiB.

\begin{table}[htbp]
	\centering
	\caption{The correspondence between the model's shape and the number of parameters.}
	\label{tab:three body details}
	\vskip 0.15in
	\resizebox{\linewidth}{!}{
		\begin{tabular}{l|lllll}
			\toprule
			\multirow{2}{*}{Models} & \multicolumn{5}{c}{Number of parameters} \\ 
			& $10^{4.5}$ & $10^{4.75}$ & $10^5$ & $10^{5.25}$ & $10^{5.5}$ \\
			\midrule
			MLP & $[48, 111, 111, 111, 12]$ & $[48, 153, 153, 153, 12]$ & $[48, 209, 209, 209, 12]$ & $[48, 283, 283, 283, 12]$ & $[48, 383, 383, 383, 12]$ \\
			EMLP & $[48, 84, 84, 84, 12]$ & $[48, 110, 110, 110, 12]$ & $[48, 147, 147, 147, 12]$ & $[48, 214, 214, 214, 12]$ & $[48, 281, 281, 281, 12]$ \\
			KAN & $[48, 76, 12]$ & $[48, 134, 12]$ & $[48, 238, 12]$ & $[48, 423, 12]$ & $[48, 752, 12]$ \\
			EKAN (Ours) & $[48, 45, 12]$ & $[48, 88, 12]$ & $[48, 151, 12]$ & $[48, 262, 12]$ & $[48, 457, 12]$ \\
			\bottomrule
	\end{tabular}}
	\vskip -0.06in
\end{table}

\subsection{Top Quark Tagging}
\label{sec:top quark tagging details}

In top quark tagging, we train all models using the Adan optimizer \cite{xie2024adan} with the learning rate of $3 \times 10^{-3}$ and the batch size of $500$. For datasets with the training set size $\leq 1000$, we set the number of epochs to $1000$, while for datasets with the training set size $> 1000$, we set the number of epochs to $2000$, which is sufficient for the BCE loss to converge to the minimum. The grids of EKAN and KAN are updated every $5$ epochs and stop updating at the $50$th epoch. We perform this experiment on a single-core NVIDIA GeForce RTX 3090 GPU with available memory of $24576$ MiB.

\section{Additional Experiments}
\label{sec:additional}

For top quark tagging, we increase the number of observed jet components for further comparison. We present the results using the $\mathrm{SO}^+(1,3)$-equivariant network as a representative, while the results of $\mathrm{SO}(1,3)$ and $\mathrm{O}(1,3)$-equivariant networks are very similar. We set the number of jet components to $n_{comp}=10$ and $n_{comp}=20$ respectively, and the experimental results are shown in \cref{tab:top quark tagging 10,tab:top quark tagging 20}. Together with \cref{tab:top quark tagging} in \cref{sec:experiments}, we note that all models exhibit significant improvements in accuracy as the observed information increases. When $n_{comp}$ is larger, our EKAN achieves comparable results to the baselines, but it does not show superior performance. This suggests that EKAN may only have advantages in scenarios with insufficient observation information (as shown in \cref{tab:top quark tagging}), benefiting from its stronger generalization capability.

\begin{table*}[htbp]
	\centering
	\caption{Test accuracy ($\%$) of different models on the top quark tagging dataset ($n_{comp} = 10$) with different training set sizes. We present the results in the format of mean $\pm$ std.}
	\label{tab:top quark tagging 10}
	\vskip 0.15in
		\begin{tabular}{l|lllll}
			\toprule
			\multirow{2}{*}{Models} & \multicolumn{5}{|c}{Training set size} \\
			& $10^2$ & $10^{2.5}$ & $10^3$ & $10^{3.5}$ & $10^4$ \\
			\midrule
			EMLP-$\mathrm{SO}^+(1,3)$ & $\mathbf{78.95 \pm 2.24}$ & $\mathbf{81.52 \pm 0.48}$ & $81.18 \pm 0.42$ & $82.50 \pm 0.30$ & $85.03 \pm 0.04$ \\
			\midrule
			CGENN & $71.82 \pm 3.28$ & $80.35 \pm 0.57$ & $79.85 \pm 1.01$ & $81.56 \pm 0.23$ & $84.17 \pm 0.76$ \\
			\midrule
			EKAN-$\mathrm{SO}^+(1,3)$ (Ours) & $78.57 \pm 0.63$ & $79.77 \pm 0.78$ & $\mathbf{82.90 \pm 0.61}$ & $\mathbf{84.83 \pm 0.19}$ & $\mathbf{87.14 \pm 0.03}$ \\
			\bottomrule
	\end{tabular}
	\vskip -0.06in
\end{table*}

\begin{table*}[htbp]
	\centering
	\caption{Test accuracy ($\%$) of different models on the top quark tagging dataset ($n_{comp} = 20$) with different training set sizes. We present the results in the format of mean $\pm$ std.}
	\label{tab:top quark tagging 20}
	\vskip 0.15in
		\begin{tabular}{l|lllll}
			\toprule
			\multirow{2}{*}{Models} & \multicolumn{5}{|c}{Training set size} \\
			& $10^2$ & $10^{2.5}$ & $10^3$ & $10^{3.5}$ & $10^4$ \\
			\midrule
			EMLP-$\mathrm{SO}^+(1,3)$ & $\mathbf{83.71 \pm 0.49}$ & $\mathbf{83.57 \pm 0.69}$ & $\mathbf{83.14 \pm 0.35}$ & $85.00 \pm 0.35$ & $86.81 \pm 0.14$ \\
			\midrule
			CGENN & $76.24 \pm 1.28$ & $82.34 \pm 0.80$ & $81.85 \pm 0.40$ & $84.67 \pm 0.87$ & $86.76 \pm 0.50$ \\
			\midrule
			EKAN-$\mathrm{SO}^+(1,3)$ (Ours) & $80.36 \pm 1.99$ & $81.25 \pm 0.70$ & $82.89 \pm 1.18$ & $\mathbf{86.21 \pm 0.21}$ & $\mathbf{89.30 \pm 0.12}$ \\
			\bottomrule
	\end{tabular}
	\vskip -0.06in
\end{table*}


\end{document}